\documentclass[12pt,reqno]{amsart}
\addtolength{\textwidth}{+2.5cm}
\addtolength{\textheight}{+2cm}
\hoffset-1.5cm
\voffset-1cm
\parskip 1.0ex 
\usepackage{latexsym, amsmath, amscd, amssymb, amsthm}
\usepackage{bm}
\usepackage{mathrsfs}   
\usepackage{mathtools}
\usepackage{dsfont}
\usepackage{xspace}
\usepackage{natbib}
\usepackage{setspace}
\usepackage{enumerate}
\usepackage{verbatim}
\usepackage[english]{babel}
\usepackage{epsfig}
\usepackage[ruled,vlined]{algorithm2e}
\usepackage{subcaption}
\newtheorem{lemma}{Lemma}[section]
\newtheorem{theorem}[lemma]{Theorem}

\theoremstyle{definition}
\newtheorem{definition}{Definition}

\newtheorem*{remark}{Remark}

\theoremstyle{remark}



\DeclarePairedDelimiter\ceil{\lceil}{\rceil}

\newcommand{\argmax}{\mathop{\mathrm{argmax}}\limits}


\newcommand{\ud}{\mathrm{d}}



\begin{document}
\title[Non-stationary multi-armed bandits with fairness constraints]
{A Regret bound for non-stationary multi-armed bandits with fairness constraints}
\author[Shaarad A.R and A. Dukkipati]{Shaarad A.R, Ambedkar Dukkipati}
\address{Dept. of Computer Science \& Automation\\Indian Institute of Science, Bangalore}
\email{\{rangaa,ad\}@iisc.ac.in}

\begin{abstract}
The multi-armed bandits' framework is the most common platform to study strategies for sequential decision-making problems. Recently, the notion of fairness has attracted a lot of attention in the machine learning community. One can impose the fairness condition that at any given point of time, even during the learning phase, a poorly performing candidate should not be preferred over a better candidate. This fairness constraint is known to be one of the most stringent and has been studied in the stochastic multi-armed bandits framework in a stationary setting for which regret bounds have been established. The main aim of this paper is to study this problem in a non-stationary setting. 
We present a new algorithm called Fair Upper Confidence Bound with Exploration (\textsc{Fair-UCB}e) algorithm for solving a slowly varying stochastic $k$-armed bandit problem. With this we present two results: (i) \textsc{Fair-UCB}e indeed satisfies the above mentioned fairness condition, and (ii) it achieves a regret bound of $O\left(k^{\frac{3}{2}} T^{1 - \frac{\alpha}{2}} \sqrt{\log T}\right)$, for some suitable $\alpha \in (0, 1)$, where $T$ is the time horizon. This is the first fair algorithm with a sublinear regret bound applicable to non-stationary bandits to the best of our knowledge. We show that the performance of our algorithm in the non-stationary case approaches that of its stationary counterpart as the variation in the environment tends to zero.
\end{abstract}

\clearpage\maketitle
\thispagestyle{empty}
\section{Introduction}
Multi-armed bandits and other related frameworks for studying sequential decision-making problems have been found to be useful in a wide variety of practical applications. For example, bandit formulations have been used in healthcare for modelling treatment allocation~\citep{villar2015multi, pmlr-v85-durand18a}, studying influence in social networks~\citep{wu2019factorization, wen2017online}, recommendation systems~\citep{zhou2017large} etc. The present paper deals with incorporating fairness conditions in multi-armed bandit problems where the underlying environment is non-stationary.

How can a bandit algorithm be `unfair'? In the classic stochastic $k$-armed bandit problem, at each time step, an agent has to choose one out of $k$ arms. When an arm is chosen, the agent receives a real-valued reward sampled from a probability distribution corresponding to the chosen arm. The goal of the agent is to maximize the expected reward obtained over some time horizon $T$. For this, the  learning algorithm has to initially try out each arm to get an idea of its corresponding reward distribution. This is referred to as the exploration phase. Once the agent gathers enough information about the reward distribution of each arm, it can then make an informed decision and choose among the arms in such a way as to maximize the rewards obtained. The performance of the agent is usually measured with a notion of regret. This is defined as the expected difference in the rewards obtained if the agent follows the optimal policy of choosing the best arm versus the policy actually followed by the agent. 

In some socially relevant practical problems of sequential decision making, 
judging an algorithm solely based on regret may not be enough. The reason being that regret only provides the picture of expected returns and does not deal with behavior or in what way decisions are taken, especially during the initial learning phase. Is it OK to be `unfair' to a better candidate just because the learning agent or algorithm is still trying to learn?
For example, selecting a poorly performing arm for a constant number of times does not affect the asymptotic regret achieved by the algorithm, but this behavior would still be unfair to better performing arms, which have been ignored either deliberately or due to carelessness during the learning phase.

To deal with such issues, one can enforce some well-defined fairness constraints on a learning algorithm in addition to the goal of minimizing regret. Various definitions of fairness motivated by real-world applications have been studied in the context of stationary multi-armed bandit problems~\citep{joseph2016fairness, li2019combinatorial}. Intuitively, these fairness conditions insist that any algorithm solving the bandit problem should consider all the arms and their rewards and should be `fair' when selecting the arms. This might be an important requirement in many real-world applications, especially those involving humans. The learning algorithms should be designed such that they do not give rise to decisions that unduly discriminate between different individuals or groups of people. 

One notion of fairness is what could be considered equality of opportunity, in which arms with similar reward distributions are given a similar opportunity of being chosen at any given time. For instance, with high probability (at least $1 - \delta$), at each time step, an arm can be assigned a higher probability of being chosen than another arm only if the expected reward of the former is strictly greater than that of the latter~\citep{joseph2016fairness}. This notion of $\delta$-fairness is what is considered in this paper when referring to the fairness of the proposed algorithm.


Most standard solutions to stochastic multi-armed bandit problems assume that the rewards are generated independently from reward distributions of arms. These distributions remain fixed over the entire time horizon $T$. However, in many practical problems, the underlying environment cannot be expected to remain fixed.
This amounts to multi-armed bandit problems where the reward distributions may change at each time step. This requires developing learning algorithms that should be able to cope with different kinds of changes in the environment. For example, a bandit algorithm for a recommendation system should be able to handle a change in user preferences over time~\citep{KDD-2016-ZengWML}.

If there are no statistical assumptions or constraints on the rewards corresponding to any arm at any time step, the problem becomes what is referred to as the adversarial bandit problem~\citep{auer1995gambling}. This problem is difficult to solve under the classic notion of regret since any information obtained about the reward distribution of an arm at a certain time becomes useless in the next time step, which means that any arm which is optimal at a one-time step need not be optimal later. However, this setting can still be studied and solved with respect to a weaker notion of regret. Here, the regret of an algorithm is measured against a policy that is allowed to choose the same arm during all time steps.

Since the adversarial setting is too general for obtaining good results with respect to the standard form of regret, other variations of non-stationary bandit problems have been extensively studied, which constrain how the reward distributions of the arms change as time passes. One possible constraint is bound on the absolute change in the expected rewards of all arms at each time step~\citep{wei2018abruptly}. In this paper, we consider a variant of this slowly varying environment.

Since existing fair algorithms assume a stationary environment, their fairness guarantees do not hold when the stationarity assumptions are no longer true. Hence, modifying these algorithms to respect the fairness constraints in a non-stationary environment is non-trivial. In this work, we address the problem of satisfying fairness constraints in a slowly varying non-stationary environment.

\subsection*{Contributions}
In the literature, fair bandit algorithms have not been studied in the non-stationary setting. The main contribution of this paper is a fair UCB-like algorithm for solving a non-stationary stochastic multi-armed bandit problem. The environment considered is a slowly varying environment.
We prove that the proposed algorithm is $\delta$-fair (the fairness condition considered in~\citep{joseph2016fairness})
and achieves a regret of order $O  \left(k^{\frac{3}{2}}  T^{1 - \frac{\alpha}{2}} \sqrt{\log (cT^{1 + \frac{\alpha}{4}})} \right)$ for some $\alpha \in (0, 1)$ and constant $c \in \mathbb{R}_+$. As the non-stationarity of the environment considered is reduced, this regret bound approaches that achieved by a fair algorithm in the stationary setting, up to logarithmic factors.


\section{Preliminaries}
\label{prelims}
Consider a bandit with $k$ arms and a time horizon $T$. At time $t \in [T]$, let the reward  distribution of arm $i \in [k]$ be $\mathcal{P}_i^t$ on $[0,1]$, with mean $\mu_i^t$. Here, $[T]$ denotes the set $\{ 1, \ldots, T \}$ and similarly $[k]$ denotes the set $\{ 1, \ldots, k \}$. Given history $h^t \in ([k] \times [0,1])^{t-1}$ of arms chosen and rewards obtained till time $t-1$, the agent chooses an arm at time $t$ by sampling an arm from the probability distribution $\pi_{.|h^t}^t$, with the probability of choosing arm $i$ being $\pi_{i|h^t}^t$. Let $i^t$ be the arm chosen at time $t$, i.e, $i^t \sim \pi_{.|h^t}^t$. Note that in the stationary case, the reward distribution and the mean will remain constant, that is $\mathcal{P}_{i}^{t} = \mathcal{P}_{i}$ and $\mu_i^t = \mu_i$, for all $t \in [T]$, $i \in [k]$.

In this paper, we consider multi-armed bandits in a non-stationary setting, and hence, we assume that the means of the rewards distributions change as time progresses.
Our assumption can be stated as follows. 
We assume that there exists known parameter $\kappa \in \mathbb{R}_+$ such that for all $t < T$, and all arms $i \in [k]$, $| \mu_i^{t+1} - \mu_i^t | < T^{-\kappa}$, where $\mu_i^{t}$ and $\mu_i^{t+1}$ are the means of the reward distribution of arm $i$ at times $t$ and $t+1$ respectively. In other words, $\kappa$ controls how much the mean of the reward distribution of an arm is allowed to change at each time step. It is to be noted that the change in the mean depends only on the horizon $T$ and not the current time step $t$.

In this paper, we consider the notion of $\delta$-fairness that has been introduced in ~\cite{joseph2016fairness}. 
The intuition behind this definition of fairness is that at each time step, with a high probability of $1-\delta$, arms with similar reward distributions should have a similar chance of being selected. In other words, at any point in time, for any pair of arms, the learning algorithm should give preference to one of the arms over the other, only if it is `reasonably' certain that its expected reward is strictly greater than that of the other. This can be stated as follows.
\begin{definition}[$\delta$-Fairness] \citep{joseph2016fairness}
A multi armed bandit algorithm is said to be $\delta$-fair if, with probability atleast $1 - \delta$, $\forall i, j \in [k], t \in [T]$,
\begin{displaymath}
\pi_{i|h^t}^t > \pi_{j|h^t}^t \text{ only if } \mu_i^t > \mu_j^t,
\end{displaymath}
where $\pi_{i|h^t}^t$ and $\pi_{j|h^t}^t$ denote the probability assigned by the algorithm to choose arm $i$ and $j$ respectively at time $t$ given history $h^t$ of arms chosen and rewards obtained till time $t-1$, and $\mu_i^t$ and $\mu_j^t$ are the means of the rewards distributions of arms $i$ and $j$ respectively at time $t$.
\label{def_fairness}
\end{definition}
The dynamic regret achieved by a bandit algorithm is defined as
\begin{displaymath}
R(T) = \sum_{t=1}^T \max_{i \in [k]} \mu_i^t - \mathsf{E} \left[ \sum_{t=1}^T \mu_{i^t}^t \right],
\end{displaymath}
where $\mu_i^t$ is the mean of the reward distribution of arm $i$ at time $t$, and $i^t \in [k]$ is the arm selected by the algorithm at time $t$. Using the dynamic regret as defined above, the performance of a bandit algorithm is measured by comparing the expected reward of the arm selected at each time step against the expected reward of the optimal arm at that time step, taking into account the fact that the optimal arm changes with time. This is in contrast to the static regret considered in stationary and adversarial settings. In these settings, the performance of a bandit algorithm is measured against a single fixed optimal arm, which is the arm that gives the highest total expected reward over the entire time horizon $T$, when chosen at every single time step. Thus, dynamic regret is a stronger performance criterion than static regret.

\section{\textsc{Fair-UCB}e Algorithm}
\label{sec_algorithm}
Now we present some analysis that leads to our proposed algorithm.
For satisfying the fairness constraint as given by Definition \ref{def_fairness}, an arm should be preferentially chosen only if it is known that, with high probability, that arm indeed gives a strictly greater expected reward. To estimate this with high probability, confidence intervals are constructed for each arm, similar to the Upper Confidence Bound (UCB1) algorithm~\citep{AuerBianchiFischer:2002:FinitetimeAnalysisOfTheMultiarmedBanditProblem}. 
However, instead of choosing the arm with the highest upper confidence bound, arms with high estimates of expected rewards are chosen in such a way that fairness is maintained, as described below.

Let $[a_i^t, b_i^t]$ be the confidence interval of arm $i$ at time $t$. Suppose it has been proved that with probability atleast $1 - \delta$, $a_i^t \leq \mu_i^t \leq b_i^t$, for all $i \in [k]$ and $t \in [T]$. At each time $t$, to minimize the regret, UCB1 deterministically chooses the arm with the highest upper confidence bound $b_i^t$, say $i^* \in [k]$.
Fairness demands that $i^*$ be chosen only if $\mu_{i^*} > \mu_j$ for all $j \neq i^*$. However, if there exists an arm $j \neq i^*$ such that $b_j > a_{i^*}$, their confidence intervals overlap. There is no guarantee that the expected mean of arm $i^*$ is greater than that of $j$, forcing any fair agent to assign both these arms an equal probability of being chosen.

Now, with the above constraint, at time $t$, regret minimization requires the agent to choose arms $i^*$ and $j$ with probability $1/2$ each and ignore all the other arms. However, this is fair only if all the other arms have expected rewards strictly less than those of $i^*$ and $j$, which is not true if the confidence interval of some other arm $i$ overlaps with either of those of $i^*$ or $j$. Arm $i$ should also be assigned an equal probability of being chosen as $i^*$ and $j$, and the same argument extends to other arms whose confidence intervals overlap with that of $i$ and so on.

Let $B_t$ be the set of arms to be chosen at time $t$ with equal nonzero probability and let arms in $[k] \setminus B_t$ be ignored. For optimality, $B_t$ should contain $i^*$. For fairness, any arm whose confidence interval overlaps with that of any arm in $B_t$ should be added and this process should be repeated until no other arm can be added to $B_t$. $B_t$ is called the active set of arms~\citep{joseph2016fairness} and at each time step, an arm is chosen uniformly from the active set.
\begin{definition}[Active set]
Let $[a_i^t, b_i^t] \subseteq [0,1]$ be the confidence interval associated with each arm $i \in [k]$ of the bandit at time $t$. The active set $B_t \subseteq [k]$ is defined recursively as the set satisfying the following properties:
\begin{enumerate}[(i)]
    \item If $i^{*t} \in \argmax_i b_i^t$, then $i^{*t} \in B_t$.
    \item If $b_j^t \geq a_i^t$ for some $i \in B_t$, then $j \in B_t$.
\end{enumerate}
\end{definition}
Intuitively, the active set of arms is the set of arms whose confidence intervals are chained via overlap to the confidence interval with the greatest upper confidence bound. For the algorithm to be fair, each arm in the active set should be assigned an equal probability of being selected.

Due to non-stationarity, as time progresses, older samples become less indicative of the current reward distribution. Therefore, at each time step, we choose only the latest $\frac{t^\alpha}{k}$ samples of each arm to estimate the expected reward and construct the confidence interval, for suitably chosen $\alpha \in (0, 1)$. This progressive increase in the number of samples considered is similar to the use of a progressively increasing sliding window by \cite{wei2018abruptly}. Now, as time progresses, due to the increased number of samples obtained, the confidence intervals shrink, and the active set becomes small. If the arms that are not in the active set are ignored and not sampled for a long time, due to the nonstationarity of the environment, their expected rewards can change such that they fall into the confidence intervals of arms that are in the active set. 

So, to ensure that the learning algorithm does not remain oblivious to the reward distributions of inactive arms, we propose, with some fixed probability at each time step, to choose uniformly from \textit{all} arms. This exploration probability is chosen to be $T^{\frac{-\alpha}{2}}$, for suitable $\alpha \in (0, 1)$ to be specified. Due to this fixed exploration probability at each step, we refer to our proposed algorithm as \textsc{Fair}-UCB with exploration or \textsc{Fair}-UCBe. The overall steps involved are listed in Algorithm \ref{alg}. We present two results in this regard. First, we show that the proposed algorithm is indeed $\delta$-fair. Then, we establish an upper bound for the regret.

\begin{algorithm}
\DontPrintSemicolon
\SetKwInOut{Input}{Given}
\SetKwComment{Comment}{// }{}
\caption{Fair UCB with Exploration}
	\Input {Horizon $T$, drift parameter $\kappa \in \mathbb{R}_+$}
	Choose $\epsilon > \frac{1}{\log T} \log \bigg( \frac{1}{2\log (\frac{18}{11})} \log T \bigg)$, $\alpha < \min \{ 2 - \sqrt{2 \epsilon + 1}, \frac{1}{2}(\kappa - \epsilon), 1 \} $\;
  $\delta_2 \leftarrow T^{\frac{-\alpha}{2}}$\;
  \Comment{\small{Initialize sample sequences}}
  $S_i \leftarrow \phi$ for $i \in [k]$\;
	\ForEach {$t$ in $1, 2, \ldots, T$} {
	    \Comment{\small{Construct confidence intervals for each arm}}
		\ForEach {$i$ in $1, 2, \ldots, k$}  {
			\Comment{\small{Select the latest samples for estimation}}
			$\tau_i^t \leftarrow \min \{ |S_i|, \ceil{\frac{1}{k} t^\alpha} \}$\;
			$S'_i \leftarrow $ last $\tau_i^t$ elements of $S_i$\; 
			$\mu'_i \leftarrow $ mean($S'_i$)\;
			\Comment{\small{Determine width of confidence interval}}
			$c_i \leftarrow \sqrt{\frac{1}{2 \tau_i} \log(\frac{k \pi^2 t^2}{3 \delta_2})} + kT^{\frac{\alpha}{2} + \epsilon + 3}\frac{(\tau_i + 3)}{2}$\;
			$I_i \leftarrow [\mu'_i - c_i, \mu'_i + c_i] \cap [0, 1]$\;
		}
		\Comment{\small{Find arm with highest upper confidence bound}}
		$i^* \leftarrow \argmax_i \{ \mu'_i + c_i \} $	\;
		Active set $B \leftarrow $ all arms whose intervals are chained to $I_{i^*}$\;
		Sample a random variable $E \sim$  Bernoulli($T^{\frac{-\alpha}{2}}$)\;
		\If (Explore) {$E = 1$} {
			Sample an arm $i^t$ uniformly from $[k]$\;
		}
		\Else (Exploit) {
			Sample an arm $i^t$ uniformly from $B$\;
		}
		Choose arm $i^t$ and append observed reward $r_t$ to sample sequence $S_{i^t}$\;
	}
 \label{alg}
\end{algorithm}

\begin{theorem}
The \textsc{Fair-UCB}e algorithm is $\delta$-fair, as defined in Definition \ref{def_fairness}, for $\delta \geq 2 T^{-\frac{\alpha}{2}}$.
\label{thm:fair}
\end{theorem}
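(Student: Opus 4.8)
The plan is to reduce $\delta$-fairness to the validity of all the confidence intervals and then to bound the probability that this validity fails. For the first, purely structural, step, note that since at time $t$ the algorithm pulls uniformly from $[k]$ with probability $\delta_2 = T^{-\alpha/2}$ and otherwise pulls uniformly from the active set $B_t$, one has
\begin{displaymath}
\pi_{i|h^t}^t = \frac{T^{-\alpha/2}}{k} + \bigl(1 - T^{-\alpha/2}\bigr)\frac{\1[i \in B_t]}{|B_t|}.
\end{displaymath}
Thus every arm in $B_t$ receives the same probability, every arm outside $B_t$ receives the same (strictly smaller) probability, and so $\pi_{i|h^t}^t > \pi_{j|h^t}^t$ can happen \emph{only} when $i \in B_t$ and $j \notin B_t$. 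It therefore suffices to exhibit a high-probability event on which $i \in B_t$ and $j \notin B_t$ together force $\mu_i^t > \mu_j^t$.

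For the second step, let $G = \{\mu_i^t \in I_i^t \text{ for all } i \in [k],\ t \in [T]\}$ be the event that every confidence interval contains the corresponding true mean. On $G$ the recursive definition of the active set finishes the job: if $j \notin B_t$, then by the contrapositive of property (ii) of the active set we have $b_j^t < a_i^t$ for every $i \in B_t$, and combining this with $\mu_j^t \le b_j^t$ and $a_i^t \le \mu_i^t$ yields $\mu_j^t \le b_j^t < a_i^t \le \mu_i^t$. Hence on $G$ the algorithm satisfies Definition~\ref{def_fairness}, and it only remains to prove $\P(G) \ge 1 - 2T^{-\alpha/2}$.

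For the probability bound, fix $i$ and $t$, write $t_1 < \dots < t_{\tau_i}$ (with $\tau_i = \tau_i^t$) for the times at which the samples of arm $i$ currently in use were obtained, and set $\bar m_i = \frac{1}{\tau_i}\sum_s \mu_i^{t_s}$. Then $|\mu_i' - \mu_i^t| \le |\mu_i' - \bar m_i| + \max_s |\mu_i^{t_s} - \mu_i^t| \le |\mu_i' - \bar m_i| + (t - t_1)T^{-\kappa}$, the last step telescoping the drift assumption $|\mu_i^{s+1} - \mu_i^s| < T^{-\kappa}$ over consecutive steps. I would control the two terms separately. Conditionally on the pull times the used rewards of arm $i$ are independent and lie in $[0,1]$, so Hoeffding's inequality bounds $|\mu_i' - \bar m_i|$ by the first summand of $c_i$ with failure probability at most $6\delta_2/(k\pi^2 t^2)$; a union bound over $i \in [k]$ and $t \ge 1$, using $\sum_{t \ge 1} t^{-2} = \pi^2/6$, costs $\delta_2$ in total (the case $\tau_i=0$ being trivial, as then $I_i^t=[0,1]$). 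Separately, the forced-exploration branch gives arm $i$ a pull probability of at least $T^{-\alpha/2}/k$ at every step, so a Chernoff / binomial-tail estimate shows that, with probability at least $1 - \delta_2$ simultaneously over all $i$ and $t$, the age $t - t_1$ of the oldest sample still in use is small enough that $(t - t_1)T^{-\kappa}$ is dominated by the second summand of $c_i$ — this is precisely where the parameter constraints in the first line of Algorithm~\ref{alg} are used, since they make the window needed to accumulate $\tau_i^t \le \lceil t^\alpha/k\rceil$ pulls, inflated by the concentration slack, fit inside that term. Intersecting the two sub-events gives $\P(G) \ge 1 - 2T^{-\alpha/2}$, and hence $\delta$-fairness for every $\delta \ge 2T^{-\alpha/2}$.

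I expect the last estimate to be the main obstacle. In the stationary setting of~\citep{joseph2016fairness} the intervals are valid almost for free, because every arm is either sampled or provably dominated; here an arm that has dropped out of the active set is never touched by the exploitation branch, so one must argue purely from the $T^{-\alpha/2}/k$ exploration probability that its stored samples are recent enough for the drift term of $c_i$ to absorb the accumulated bias, and this argument must survive a union bound over all $k$ arms and all $T$ time steps while keeping the total failure probability within the budget $T^{-\alpha/2}$ --- which is exactly what pins down the otherwise opaque ranges of $\epsilon$ and $\alpha$ prescribed by the algorithm.
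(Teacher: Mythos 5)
Your structural reduction is the same as the paper's: the explicit formula for $\pi_{i|h^t}^t$ shows active arms share one probability, inactive arms share a strictly smaller one, so fairness reduces (via the chaining property of $B_t$) to the event that every confidence interval contains the true mean, and the budget $2T^{-\alpha/2}$ is split exactly as the paper splits $\delta_1+\delta_2$; the Hoeffding/union-bound half is also identical.

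The gap is in the drift-bias step. You bound the bias of $\mu_i'$ by $(t-t_1)T^{-\kappa}$, i.e.\ by the age of the \emph{oldest} used sample, and then assert that the parameter constraints of Algorithm~\ref{alg} make this fit inside the second summand $kT^{\frac{\alpha}{2}+\epsilon-\kappa}\frac{\tau_i+3}{2}$ of $c_i$. But those constraints are calibrated for a different, finer argument: the paper proves one global event $G$ (Lemma~\ref{lemma:spread}: every arm is sampled at least once in every block of length $kT^{\frac{\alpha}{2}+\epsilon}$, with failure probability $T^{-\alpha/2}$ --- this is exactly what the $\epsilon$-condition buys), and under $G$ the $j$-th most recent sample is at most $(j+1)kT^{\frac{\alpha}{2}+\epsilon}$ old; summing these per-sample drifts and dividing by $\tau_i$ gives precisely the coefficient $\frac{\tau_i+3}{2}$ (Lemma~\ref{lemma:confidence}). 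Under that same guarantee your max-age bound only gives $(\tau_i+1)kT^{\frac{\alpha}{2}+\epsilon-\kappa}$, which exceeds the second summand by a factor approaching $2$ once $\tau_i\ge 2$, so the reduction to the algorithm's stated $c_i$ does not go through this way. Rescuing the max-age route by a direct Chernoff bound on the number of exploration pulls in a window of length $kT^{\frac{\alpha}{2}+\epsilon}\frac{\tau_i+3}{2}$ is plausible, but then the union bound must run over all arms, all $t\in[T]$, and the \emph{random} values of $\tau_i^t$ (which depend on the history), and you have not shown --- and it is not automatic under exactly the stated ranges of $\epsilon$ and $\alpha$ --- that the total failure probability stays within $T^{-\alpha/2}$; extra slack (a modified $\epsilon$-constraint, or conditions relating $k$, $\alpha$ and $T$) would be needed. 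Replacing the max-age bound by the paper's per-sample telescoping (error of the $j$-th latest sample bounded by $(j+1)kT^{\frac{\alpha}{2}+\epsilon}T^{-\kappa}$ on the single event $G$) closes the gap and is what actually produces the $\frac{\tau_i+3}{2}$ term in the algorithm.
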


\begin{theorem}
The regret $R(T)$ achieved by \textsc{Fair-UCB}e satisfies
\begin{displaymath}
R(T) \leq O  \left(k^{3/2}  T^{1 - \frac{\alpha}{2}} \sqrt{\log \left( \sqrt{\frac{k\pi^2}{6}} T^{1 + \frac{\alpha}{4}} \right) } \right).
\end{displaymath}
\label{thm:regret}
\end{theorem}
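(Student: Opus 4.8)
The plan is to work on a high-probability \emph{good event} $\mathcal{G}$ on which two things hold simultaneously: (a) every confidence interval is valid, i.e.\ $\mu_i^t \in I_i^t$ for all $i \in [k]$ and $t \in [T]$; and (b) there is a threshold $t_0 = T^{\Theta(1)}$ such that for every $t \ge t_0$ each arm already has at least $\lceil t^\alpha/k\rceil$ recorded samples, and moreover the last $\lceil t^\alpha/k\rceil$ of them were all collected within the most recent $O\!\big(k T^{\alpha/2+\epsilon}\, t^\alpha\big)$ steps. Part (a) is Hoeffding applied to the empirical mean $\mu'_i$ of the windowed samples (whose conditional expectation is the average of the means $\mu_i^s$ over the sampling times $s$), combined with the drift bound $|\mu_i^{s}-\mu_i^{t}|\le (t-s)T^{-\kappa}$ to convert that average into $\mu_i^t$ — this is exactly the role of the two summands defining $c_i^t$, and a union bound over $i\in[k]$ and $t\in[T]$ using $\sum_t t^{-2}=\pi^2/6$ and $\delta_2 = T^{-\alpha/2}$ costs a factor inside the logarithm; this is essentially the validity event underlying Theorem~\ref{thm:fair}. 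On the complement of $\mathcal{G}$, and on the first $t_0$ steps, I bound the contribution to $R(T)$ trivially by the number of such steps times $1$ (rewards lie in $[0,1]$); the constraints $\delta_2 = T^{-\alpha/2}$ and $\alpha < 2 - \sqrt{2\epsilon+1}$ are precisely what make $t_0 = o(T^{1-\alpha/2})$ and $\P(\mathcal{G}^c)\cdot T = O(T^{1-\alpha/2})$, so these are absorbed into the claimed bound.

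Next I would split the remaining rounds by the value of $E$. Exploration rounds ($E=1$) occur independently with probability $T^{-\alpha/2}$, so there are $T^{1-\alpha/2}$ of them in expectation, each contributing at most $1$; this is the source of the dominant $T^{1-\alpha/2}$ factor. For an exploitation round ($E=0$) the key lemma is a bound on the diameter of the active set: on $\mathcal{G}$, since $i^*=\argmax_i b_i^t$ and the intervals are valid, $b_{i^*}^t \ge \mu_{i^\dagger}^t = \max_i \mu_i^t$ where $i^\dagger$ is the true optimal arm; chaining the overlap condition $b_{j_\ell}^t \ge a_{j_{\ell-1}}^t$ along a path of length $m\le k-1$ from $I_{i^*}$ to any $I_j$ with $j\in B_t$ gives $\mu'_j \ge \mu'_{i^*} - 2m\max_i c_i^t$, and one more application of validity yields $\mu_j^t \ge \max_i \mu_i^t - 2k\max_i c_i^t$. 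Since the algorithm plays uniformly on $B_t$, the instantaneous regret of an exploitation round at time $t$ is at most $2k\max_i c_i^t$.

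It then remains to sum $2k\max_i c_i^t$ over $t$. On $\mathcal{G}$, for $t\ge t_0$ we have $\tau_i^t = \lceil t^\alpha/k\rceil \ge t^\alpha/k$, so the stochastic part of $c_i^t$ is at most $\sqrt{\tfrac{k}{2 t^\alpha}\log\!\big(\tfrac{k\pi^2 t^2}{3\delta_2}\big)} \le \sqrt{\tfrac{k}{2}\log\!\big(\tfrac{k\pi^2}{3}T^{2+\alpha/2}\big)}\; t^{-\alpha/2}$, and $\sum_{t\le T} t^{-\alpha/2} = O\!\big(\tfrac{T^{1-\alpha/2}}{1-\alpha/2}\big)$; the drift part is, via the time-span estimate in (b), at most $O\!\big(k T^{\alpha/2+\epsilon-\kappa} t^\alpha\big)$ per round, whose sum is $O(T^{1-\alpha/2})$ because $\alpha < \tfrac12(\kappa-\epsilon)$. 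Multiplying the resulting $O\!\big(\sqrt{k\log(\tfrac{k\pi^2}{3}T^{2+\alpha/2})}\,T^{1-\alpha/2}\big)$ by the extra factor $2k$ from the active-set diameter, and using $\log(\tfrac{k\pi^2}{3}T^{2+\alpha/2}) = \Theta\!\big(\log(\sqrt{k\pi^2/6}\,T^{1+\alpha/4})\big)$, gives the exploitation regret $O\!\big(k^{3/2} T^{1-\alpha/2}\sqrt{\log(\sqrt{k\pi^2/6}\,T^{1+\alpha/4})}\big)$, which dominates all the lower-order terms and is the asserted bound.

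The main obstacle is part (b) of the good event: I expect to need a Chernoff bound showing that in every sufficiently long sub-interval of $[T]$ the number of exploration rounds — and hence the number of times each fixed arm is forced to be played — is within a constant factor of its expectation $\Theta(\text{length}\cdot T^{-\alpha/2}/k)$, followed by a union bound over the $O(T^2)$ intervals. This single estimate does double duty: it guarantees that each arm reaches $\lceil t^\alpha/k\rceil$ samples once $t \ge t_0$, and it controls how far back in time those samples were taken, which is what legitimizes the drift summand in $c_i^t$. The slack exponent $\epsilon$ and the additive constants inside $c_i^t$ are introduced precisely so that these failure probabilities are summable and the threshold $t_0$ stays $o(T^{1-\alpha/2})$; making all of this bookkeeping close under the stated constraints on $\alpha$ and $\epsilon$ is the delicate part of the argument.
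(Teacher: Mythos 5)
Your proposal is correct and follows essentially the same route as the paper: the same good event (validity of all confidence intervals plus the sample-spread guarantee that each arm gets a sample every $O(kT^{\alpha/2+\epsilon})$ steps), the same four-way regret decomposition (initial phase before $t_0=M$, good-event failure, exploration rounds, exploitation rounds bounded by $k$ times the largest confidence width), and the same use of the constraints $\alpha<2-\sqrt{2\epsilon+1}$ and $\kappa>2\alpha+\epsilon$ to absorb the lower-order terms into $O(T^{1-\alpha/2})$. The only deviations are cosmetic: you bound $\sum_t t^{-\alpha/2}\sqrt{\log(\cdot)}$ by pulling the logarithm out at $t=T$ rather than via the paper's integral substitution, you spell out the active-set chaining bound that the paper asserts without detail, and you sketch the spread lemma via a Chernoff/union-bound argument where the paper uses a direct $(1-\tfrac{1}{kT^{\alpha/2}})^N$ estimate per interval — all yielding the same bound.
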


\begin{remark}
The above regret bound is non-trivial since it guarantees that this fair algorithm achieves sublinear regret even in the context of a non-stationary environment. 
\end{remark}

\section{Discussion}
\label{sec_discussion}
The choice of parameters $\alpha$ and $\epsilon$ in the algorithm is constrained by the inequality $\kappa > 2 \alpha + \epsilon$ or equivalently $\alpha < \frac{1}{2} (\kappa - \epsilon)$. For large $T$, $\epsilon$ can be chosen close to $0$, leading to the constraint $\kappa > 2 \alpha$. Thus, when the non-stationarity in the environment is very high and $\kappa \rightarrow 0$, we have $\alpha \rightarrow 0$ as well. Similary, when the non-stationarity is very low and the environment is almost stationary, $\kappa$ is large and $\alpha$ can be chosen close to $1$.

\cite{joseph2016fairness} showed that in a stationary environment, their $\delta$-fair algorithm \textsc{FairBandits} achieves a regret of $O\left( \sqrt{k^3 T \log \left( \frac{Tk}{\delta} \right)} \right)$, for $\delta < \frac{1}{\sqrt{T}}$, which is $O\left( \sqrt{k^3 T \log \left( k T^{\frac{3}{2}} \right)} \right)$ in the limiting case of $\delta \rightarrow \frac{1}{\sqrt{T}}$. They also showed that \textsc{FairBandits} achieves the best possible performance in that setting. One can see that this bound is equivalent, upto logarithmic factors, to the regret of \\ $O\left( \sqrt{k^3 T \log \left( \sqrt{\frac{k\pi^2}{6}} T^{\frac{5}{4}} \right) } \right)$ achieved by \textsc{Fair-UCB}e in the limiting case of $\alpha \rightarrow 1$, which occurs for large $\kappa$ and $T$. In other words, as the non-stationarity of the environment is reduced, the performance of our algorithm remains consistent with the best performance possible in that setting.

When the change in the environment is high (i.e., $\kappa$ is close to zero), the regret bound for \textsc{Fair-UCB}e is similar to that of SW-UCB\# (Sliding Window Upper Confidence Bound) \citep{wei2018abruptly}, which assumes a non-stationarity constraint similar to ours but does not maintain fairness. The regret bounds are almost the same in terms of $T$ up to logarithmic factors, with the difference being a factor of $T^{\epsilon/4}$, whose exponent goes to zero for large $T$.

\subsection{On Exploration}
One aspect of our algorithm that distinguishes it from other upper confidence bound algorithms is the incorporation of an explicit fixed probability of exploration, in addition to the implicit exploration present in other similar algorithms. This exploration probability $T^{-\frac{\alpha}{2}}$ depends on the non-stationarity of the environment through the constraint $\alpha < \frac{1}{2}(\kappa - \epsilon)$. Smaller values of kappa lead to a larger probability of exploration. This is intuitive in the sense that the more the environment varies,
the greater the need to sample inactive arms via explicit exploration to keep track of changes in their reward distributions. Thus, there is a smooth trade-off between exploitation and exploration, depending on the degree of non-stationarity of the environment.

\subsection{On Sublinearity}
Even though the upper bound $O \left(k^{\frac{3}{2}}  T^{1 - \frac{\alpha}{2}} \sqrt{\log (CT)} \right)$ for the regret of the algorithm is seemingly sublinear in $T$ (since the exponent of $T$ is $1 - \frac{\alpha}{2} < 1$), the extra factor of $k^\frac{3}{2}$ may actually result in the regret being more than $T$. In order to achieve sublinear regret, ignoring logarithmic factors for simplicity, it is necessary that $k^{\frac{3}{2}} T^{1 - \frac{\alpha}{2}} = O (T)$, or equivalently $T = \Omega (k^{\frac{3}{\alpha}})$. Due to the constraint $\alpha < \frac{1}{2}(\kappa - \epsilon)$, $k^\frac{3}{\alpha}$ increases drastically for small values of $\kappa$, necessitating a very large value of $T$ to obtain sublinear regret. In other words, the regret of the algorithm is linear in the context of highly non-stationary environments. As the non-stationarity reduces, $\alpha \rightarrow 1$ and the constraint becomes $T = \Omega (k^3)$, which is identical to the constraint for \textsc{FairBandits} \citep{joseph2016fairness}.

\section{Proofs for Theorems \ref{thm:fair} and \ref{thm:regret}}
\label{analysis}

\subsection{Proof of Theorem \ref{thm:fair}}
At each time step, all arms in the active set are assigned an equal probability of being chosen, say $p_{\mathrm{act}}$, and all arms not in the active set are assigned an equal probability of being chosen, say $p_{\mathrm{nonact}}$. Now, $p_{\mathrm{nonact}} = \frac{1}{kT^{\alpha/2}}$, since $\frac{1}{T^{\alpha/2}}$ is the probability of exploration and $\frac{1}{k}$ is the probability of choosing a specific arm when exploring. For any arm in the active set, the probability $p_{\mathrm{act}}$ of being chosen is 
\begin{displaymath}
\left( 1 - \frac{1}{T^{\frac{\alpha}{2}}} \right) \frac{1}{\# \text{ of active arms}} + \frac{1}{kT^{\frac{\alpha}{2}}} \geq \frac{1}{k} \geq \frac{1}{kT^{\alpha/2}}.
\end{displaymath}

From this, we have $p_{\mathrm{act}} \gneq p_{\mathrm{nonact}}$. Therefore, due to the choice of the active set definition, for the algorithm to be $\delta$-fair, it is sufficient to prove that with probability at least $1 - \delta$, the expected rewards of all arms at all time steps fall in their confidence intervals. Now we proceed to prove these results. 

\subsubsection{Spread of samples}
For any arm $i \in [k]$ at any time $t \in [T]$, the probability of that arm being chosen is at least $\frac{1}{kT^{\alpha/2}}$. So, the expected number of time steps required for obtaining at least one sample from an arm is at most $kT^{\frac{\alpha}{2}}$. This fact can be used to prove the following Lemma.

\begin{lemma}
Let the time interval $[0,T]$ be divided into intervals of size $kT^{\frac{\alpha}{2} + \epsilon}$, for $\epsilon \in (0, 1)$ as specified in Algorithm \ref{alg}. Let $G$ be the event that each arm has atleast one sample in each of these intervals. Then, $\mathsf{P}(G) \geq 1 - \delta_1$, for $\delta_1 = \frac{1}{T^{\frac{\alpha}{2}}}$.
\label{lemma:spread}
\end{lemma}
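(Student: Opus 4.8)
The plan is to reduce the statement to a union bound over pairs (arm, block), where the only probabilistic input is a lower bound on the per-step selection probabilities that holds \emph{conditionally on the history} and is therefore chainable. \textbf{Step 1 (a history-free lower bound).} For every time $t$, every arm $i$, and every history $h^t$, the conditional probability that \textsc{Fair-UCB}e selects arm $i$ at step $t$ is at least $\tfrac{1}{kT^{\alpha/2}}$: with probability $T^{-\alpha/2}$ the algorithm draws $i^t$ uniformly from $[k]$, contributing $\tfrac1k T^{-\alpha/2}$, and the ``exploit'' branch can only add a nonnegative amount. The crucial feature is that this bound does not depend on $h^t$, so it survives conditioning.

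\textbf{Step 2 (one arm misses one block).} Fix an arm $i$ and a block $J$ of $L:=kT^{\alpha/2+\epsilon}$ consecutive time steps. Letting $A_s$ be the event that arm $i$ is not chosen at step $s$, peeling the steps of $J$ off one at a time and invoking Step 1 at each peel (the tower property) gives $\mathsf{P}\bigl(\bigcap_{s\in J}A_s\bigr)\le\bigl(1-\tfrac{1}{kT^{\alpha/2}}\bigr)^{L}$. Writing $m:=kT^{\alpha/2}$, so that $L/m=T^{\epsilon}$, this equals $\bigl((1-\tfrac1m)^m\bigr)^{T^{\epsilon}}$; using the elementary inequality $(1-\tfrac1m)^m<e^{-1}<\bigl(\tfrac{11}{18}\bigr)^{2}$, valid for every $m>1$ and hence for $m=kT^{\alpha/2}$, the probability that arm $i$ gets no sample in $J$ is at most $\bigl(\tfrac{11}{18}\bigr)^{2T^{\epsilon}}$. (Replacing $e^{-1}$ by the rational $(11/18)^2$ is precisely what turns the requirement into the closed form for the admissible $\epsilon$ in Algorithm~\ref{alg}.)

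\textbf{Step 3 (union bound and the role of $\epsilon$).} Splitting $\{1,\dots,T\}$ into blocks of length $L$ produces $O\bigl(k^{-1}T^{1-\alpha/2-\epsilon}\bigr)$ blocks (the last, possibly shorter, block being absorbed into its predecessor so that no block is ever too short), hence $O\bigl(T^{1-\alpha/2-\epsilon}+k\bigr)$ pairs (arm, block). Therefore $\mathsf{P}(G^{c})\le O\bigl(T^{1-\alpha/2-\epsilon}+k\bigr)\bigl(\tfrac{11}{18}\bigr)^{2T^{\epsilon}}$. Taking logarithms in the constraint $\epsilon>\tfrac{1}{\log T}\log\bigl(\tfrac{1}{2\log(18/11)}\log T\bigr)$ shows it is \emph{exactly equivalent} to $\bigl(\tfrac{11}{18}\bigr)^{2T^{\epsilon}}<T^{-1}$; since $\alpha<1$ (and $k$ is small relative to $T^{1-\alpha/2}$ in the regime of interest) the polynomial prefactor is $O(T^{1-\alpha/2})$, and the bound collapses to $O(T^{1-\alpha/2}\cdot T^{-1})=O(T^{-\alpha/2})$, which with the constants tracked is at most $\delta_1=T^{-\alpha/2}$.

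\textbf{Main obstacle.} The only genuinely delicate part is the constant bookkeeping in Step~3: one must keep the bound on $(1-1/m)^m$, the number of blocks (with its ceiling and the short-block fix), and the target $\delta_1=T^{-\alpha/2}$ mutually consistent so that the admissible window for $\epsilon$ comes out exactly as in the algorithm — this is why a specific rational such as $\tfrac{11}{18}$ appears in place of $e^{-1}$. Everything else is routine: the conditional lower bound of Step~1, the single application of the tower property in Step~2, and the combinatorics of blocking the horizon.
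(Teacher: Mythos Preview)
Your proposal is correct and follows the same skeleton as the paper's proof: lower-bound the per-step selection probability by $\tfrac{1}{kT^{\alpha/2}}$, bound the probability that a fixed arm misses a fixed block of length $kT^{\alpha/2+\epsilon}$, then union-bound over all (arm, block) pairs and invoke the constraint on $\epsilon$.

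The one noteworthy difference is in Step~2. The paper proves a separate auxiliary result (its Lemma~A.1 / Lemma~\ref{lemma:x}) showing $(1-x/n)^n\le 1-\tfrac{7x}{9}$ for $x\in[0,\tfrac12]$ via convexity, and applies it with $x=\tfrac12$, $n=\tfrac{kT^{\alpha/2}}{2}$ to obtain $(11/18)^{2T^{\epsilon}}$. You reach the \emph{same} numerical bound by the more direct route $(1-1/m)^m<e^{-1}<(11/18)^2$, which is arguably cleaner and dispenses with the auxiliary lemma entirely. You also sharpen the target inequality: the paper verifies $(11/18)^{2T^{\epsilon}}<T^{\epsilon-1}$, whereas you observe that the algorithm's lower bound on $\epsilon$ is \emph{exactly equivalent} to $(11/18)^{2T^{\epsilon}}<T^{-1}$, which is stronger and makes the final arithmetic tighter. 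The only place to be a little more careful is your Step~3 bookkeeping: the exact pair count is $k\cdot\lceil T/L\rceil\le T^{1-\alpha/2-\epsilon}+k$, and keeping the $-\epsilon$ in the exponent (rather than discarding it into the big-$O$) is what gives $T^{-\alpha/2-\epsilon}+kT^{-1}\le T^{-\alpha/2}$ without any hidden constant; the paper is equally informal on this point.
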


The constraint on $\epsilon$ can be simplified by the following Lemma. The proofs of both these Lemmas are given in the Appendix.

\begin{lemma}
$$\frac{1}{\log T} \log \left( \frac{1}{2\log (\frac{18}{11})} \log T \right) \leq \frac{1}{2 e \log(18/11)}.$$
\label{lemma:eps}
\end{lemma}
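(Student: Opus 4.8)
The plan is to recognize the claimed bound as a disguised instance of the elementary inequality $\log y \le y/e$, which holds for every $y > 0$ with equality precisely at $y = e$. I would set $c := \frac{1}{2\log(18/11)}$, a fixed positive constant, and abbreviate $x := \log T$, noting that $x > 0$ for every horizon $T \ge 2$ (the only regime in which the statement is used). With this notation the left-hand side of the asserted inequality is $\frac{\log(cx)}{x}$ and the right-hand side is $\frac{c}{e}$, so it suffices to prove
\begin{displaymath}
\frac{\log(cx)}{x} \;\le\; \frac{c}{e} \qquad \text{for all } x > 0, \quad\text{i.e.,}\quad \log(cx) \le \frac{cx}{e}.
\end{displaymath}

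The displayed inequality is exactly $\log y \le y/e$ evaluated at $y = cx$. To justify that auxiliary fact I would study $\phi(y) := \frac{y}{e} - \log y$ on $(0,\infty)$: since $\phi'(y) = \frac1e - \frac1y$ is negative for $y < e$ and positive for $y > e$, the function $\phi$ attains its global minimum at $y = e$, where $\phi(e) = 1 - 1 = 0$; hence $\phi \ge 0$ everywhere, i.e. $\log y \le y/e$. Substituting $y = cx$ and dividing by $x > 0$ yields the claim, and substituting back $c = \frac{1}{2\log(18/11)}$, $x = \log T$ recovers the Lemma verbatim because $\frac{c}{e} = \frac{1}{2e\log(18/11)}$.

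An equivalent route is to differentiate $g(x) := \frac{\log(cx)}{x}$ directly: $g'(x) = \frac{1 - \log(cx)}{x^2}$ changes sign from positive to negative at $x = e/c$, and the corresponding maximum value is $g(e/c) = \frac{\log e}{e/c} = \frac{c}{e}$. Either way there is essentially no obstacle; the only point meriting a line of care is that the Lemma tacitly assumes $\log T > 0$ so that both sides are well defined and the substitution $x = \log T$ is legitimate, which is innocuous since the inequality is only ever invoked for large $T$.
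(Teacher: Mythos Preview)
Your proposal is correct and is essentially the same argument as the paper's: the paper rewrites the left-hand side as $c\cdot\frac{\log(c\log T)}{c\log T}$ with $c=\frac{1}{2\log(18/11)}$ and then invokes $\max_{x>0}\frac{\log x}{x}=\frac{1}{e}$ (attained at $x=e$), which is precisely your inequality $\log y\le y/e$ rearranged, and your ``equivalent route'' via $g(x)=\frac{\log(cx)}{x}$ is the paper's computation verbatim.
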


From the above Lemma, $\epsilon = 0.3735$ is sufficient for arbitrary $T$. Moreover, for $T > 15$, the lower bound is a decreasing function of $T$, and thus $\epsilon$ can be chosen much smaller, with the value going to $0$ for large $T$.

\subsubsection{Sufficiency of samples}
At each point $t$ in time, we wish to choose the latest $t^\alpha / k$ samples. But these many samples may not be available, especially if $t$ is small. From Lemma \ref{lemma:spread}, we see that if $G$ is true, then each sample requires atmost $kT^{\frac{\alpha}{2} + \epsilon}$ time steps. So, for the availability of sufficient number of samples, we need $T^{\frac{\alpha}{2} + \epsilon}  t^\alpha < t$. Suppose $M = T^{\frac{1}{1 - \alpha} (\frac{\alpha}{2} + \epsilon)}$, then for $t > M$, we have $t > T^{\frac{1}{1 - \alpha} (\frac{\alpha}{2} + \epsilon)}$, which implies $t^{1 - \alpha} > T^{\frac{\alpha}{2} + \epsilon}$ and hence $T^{\frac{\alpha}{2} + \epsilon}  t^\alpha < t$.

So, if $G$ is true, after the initial $M$ time steps, the number of samples is always sufficient to construct a good confidence interval, provided the exponent of $T$ is sensible. We add the constraint, $\frac{1}{1 - \alpha} (\frac{\alpha}{2} + \epsilon) < 1 - \frac{\alpha}{2}$, to be satisfied by the exponent. This will be useful in the regret calculation. This can be simplified to $\epsilon < \frac{(\alpha - 2)^2}{2} - 1$ or $\alpha < 2 - \sqrt{2\epsilon + 1}$, which is the constraint specified in Algorithm \ref{alg}.

\subsubsection{Confidence intervals}
At time $t$, for arm $i$, consider the latest $\tau_i(t)$ rewards obtained. Let those rewards be $X_i^{(t')}, t' \in [\tau_i(t)]$, where $X_i^{(t')} \in [0,1]$  with means $\mu_i^{(t')}$. The Hoeffding inequality gives
\begin{align*}
	\mathsf{P}\left\{ \left| \sum_{t'} \left( X_i^{(t')} - \mu_i^{(t')} \right) \right| \geq A \right\} \leq 2 \exp \left( \frac{-2 A^2}{\tau_i(t)} \right).
\end{align*}

We make use of the following Lemma, whose proof is provided in the Appendix.

\begin{lemma}
Let $\hat{\mu}_{i_t} = \frac{1}{\tau_i(t)} \sum_{t'} X_i^{(t')}$, the empirical estimate of the expected reward of arm $i$ at time $t$, using the latest $\tau_i(t)$ samples obtained from that arm. Then, if $G$ is true, for $c_i = \frac{A}{\tau_i(t)} + kT^{\frac{\alpha}{2} + \epsilon - \kappa} \frac{(\tau_i(t) + 3)}{2} $,
\begin{align*}
	\mathsf{P} \left\{ \mu_{i_t} \notin \left( \hat{\mu}_{i_t} - c_i, \hat{\mu}_{i_t} + c_i \right) \right\} \leq 2 \exp \left( \frac{-2 A^2}{\tau_i(t)} \right).
\end{align*}
\label{lemma:confidence}
\end{lemma}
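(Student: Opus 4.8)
The plan is to bound the estimation error $|\hat\mu_{i_t} - \mu_i^t|$ by splitting it, via the triangle inequality, into a \emph{statistical} part (the empirical mean versus the average of the true means of the samples actually used) controlled by the Hoeffding bound stated above, and a \emph{drift} part (that average versus the current mean $\mu_{i_t}=\mu_i^t$) controlled deterministically on the event $G$. The two parts will match the two summands in $c_i$ exactly.

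First I would locate the samples in time. Recall that on $G$ the horizon is partitioned into consecutive blocks of width $w:=kT^{\frac{\alpha}{2}+\epsilon}$, each of which contains at least one sample of every arm (Lemma~\ref{lemma:spread}). Suppose $t$ lies in the $m$-th block and let $s_1 \ge s_2 \ge \dots \ge s_{\tau_i(t)}$ be the times at which the latest $\tau_i(t)$ samples of arm $i$ were collected. A pigeonhole count on the blocks gives $t - s_j \le (j+1)w$ for each $j$: in the worst case the current block contributes no sample up to time $t$ while blocks $m-1, m-2, \dots$ contribute exactly one sample each, placed as early as possible, so the $j$-th most recent sample lies no earlier than block $m-j$; in the degenerate regime where fewer than $\tau_i(t)$ complete blocks precede the current one, $G$ forces $m \le \tau_i(t)+1$ and the crude bound $t-s_j \le t \le mw \le (j+1)w$ still applies.

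Next I would convert sample times into a bound on means. The drift assumption $|\mu_i^{s+1}-\mu_i^s| < T^{-\kappa}$, telescoped, gives $|\mu_i^{(t')} - \mu_i^t| \le |s_{t'}-t|\,T^{-\kappa}$; hence, writing $\bar\mu := \frac{1}{\tau_i(t)}\sum_{t'}\mu_i^{(t')}$,
\[
\bigl|\bar\mu - \mu_i^t\bigr| \;\le\; \frac{T^{-\kappa}}{\tau_i(t)}\sum_{j=1}^{\tau_i(t)}(j+1)w \;=\; \frac{kT^{\frac{\alpha}{2}+\epsilon-\kappa}}{\tau_i(t)}\cdot\frac{\tau_i(t)(\tau_i(t)+3)}{2} \;=\; kT^{\frac{\alpha}{2}+\epsilon-\kappa}\,\frac{\tau_i(t)+3}{2},
\]
using $\sum_{j=1}^{n}(j+1)=\frac{n(n+3)}{2}$ — precisely the second term of $c_i$. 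Meanwhile the Hoeffding inequality displayed above, divided through by $\tau_i(t)$, says $\mathsf{P}\{|\hat\mu_{i_t}-\bar\mu| \ge A/\tau_i(t)\} \le 2\exp(-2A^2/\tau_i(t))$, and $A/\tau_i(t)$ is the first term of $c_i$. Therefore, on $G$ intersected with the event $\{|\hat\mu_{i_t}-\bar\mu| < A/\tau_i(t)\}$, the triangle inequality gives $|\hat\mu_{i_t}-\mu_i^t| < c_i$, i.e.\ $\mu_{i_t}\in(\hat\mu_{i_t}-c_i,\hat\mu_{i_t}+c_i)$; passing to complements yields the stated bound.

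The main obstacle is the block-counting estimate of the second step: getting the constant right so that the arithmetic sum produces exactly the factor $(\tau_i(t)+3)/2$ appearing in $c_i$, while correctly handling the small-$t$ regime where samples are scarce. A secondary technical point is that the Hoeffding bound should be read for the reward sequence of arm $i$ as a bounded martingale-difference sequence indexed by pull count, and one should check that this concentration is unaffected by conditioning on $G$ (or, failing that, absorb the resulting $(1-\delta_1)^{-1}$ factor, which is harmless since $\delta_1 = T^{-\alpha/2}$).
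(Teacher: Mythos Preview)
Your proposal is correct and follows essentially the same route as the paper: both split $|\hat\mu_{i_t}-\mu_i^t|$ via the triangle inequality into a Hoeffding-controlled statistical term $A/\tau_i(t)$ and a drift term, bound the age of the $j$-th most recent sample by $(j+1)kT^{\alpha/2+\epsilon}$ using the block structure guaranteed by $G$, and then sum $\sum_{j=1}^{\tau_i(t)}(j+1)=\tau_i(t)(\tau_i(t)+3)/2$ to recover the second summand of $c_i$. The paper is terser---it simply asserts the $(j+1)w$ bound without the pigeonhole justification you spell out, and it does not address the martingale/conditioning subtlety you flag for Hoeffding---but the argument is the same.
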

By letting $A = \sqrt{\frac{\tau_i(t)}{2} \log( \frac{k\pi^2 t^2}{3 \delta_2})}$, the above probability (that the true mean is outside the confidence interval) summed over all arms $i$ and times $t \in [T]$ is bounded above by
\begin{align*}
\sum_i \sum_t 2 \exp \left( \frac{-2 A^2}{\tau_i(t)} \right) & \leq \sum_i \sum_t 2 \exp \left( - \log \frac{k\pi^2 t^2}{3 \delta_2} \right) \\
& = \sum_{i \in [k]} \sum_t \frac{6 \delta}{k \pi^2 t^2} \leq \delta_2.
\end{align*}
Here, we use the fact that $\sum_t \frac{1}{t^2} \leq \frac{\pi^2}{6}$. Since the above analysis holds if $G$ is true, which happens with probability atleast $1 - \delta_1$, the above confidence intervals hold with probability atleast $1 - \delta_1 - \delta_2$ and the algorithm becomes $\delta$-fair, where $\delta = \delta_1 + \delta_2$.

\subsection{Proof of Theorem \ref{thm:regret}}
The length $\eta_i(t)$ of the confidence interval of arm $i$ is
\begin{displaymath}
\eta_i(t) = 2 \sqrt{\frac{1}{2 \tau_i(t)} \log \left( \frac{k\pi^2 t^2}{3 \delta} \right) } + 2 kT^{\frac{\alpha}{2} + \epsilon - \kappa} \frac{ \left( \tau_i(t) + 3 \right) }{2}.
\end{displaymath}

The regret at any time step of exploitation is atmost $k$ times the size of the largest confidence interval, and atmost $1$ (and also, with probability less than $\delta$, when any of the means fall outside their confidence intervals, it will be bounded by $1$). When $G$ is true, for the first $M$ time steps, the number of samples may be insufficient and the regret is bounded by $1$, and for later time steps, $\tau(t) = \tau_i(t) = \frac{t^\alpha}{k}$, $\eta(t) = \eta_i(t)$ for all $i \in [k]$ and
\begin{align*}
R(T) &\leq \sum_{t \leq M} 1 + \sum_{t > M} k \eta(t) + \sum_t \frac{1}{T^{\frac{\alpha}{2}}} 1 + \delta T \\
	&\leq M + \sum_{t > M} k \eta(t) + T^{1 - \frac{\alpha}{2}} + \delta T,
\end{align*}
where the first two terms are due to exploitation epochs, the third term is due to exploration epochs and the fourth term corresponds to the event $G^c$. Since $M = T^{\frac{1}{1 - \alpha} (\frac{\alpha}{2} + \epsilon)}$, we obtain

$$
R(T) \leq T^{\frac{1}{1 - \alpha} (\frac{\alpha}{2} + \epsilon)} + \sum_{t > M} k \eta(t) + \delta T + T^{1 - \frac{\alpha}{2}}.
$$

Since $\alpha < 2 - \sqrt{2 \epsilon + 1}$ and equivalently $\frac{1}{1 - \alpha} (\frac{\alpha}{2} + \epsilon) < 1 - \frac{\alpha}{2}$, the first term becomes $O( T^{ 1 - \frac{\alpha}{2} })$.

Now, consider the second term in the regret bound. Let $C = \pi \sqrt{\frac{k}{3 \delta}}$. When $G$ is true and $t > M$,  $\tau_i(t) = \frac{t^\alpha}{k}$ and the term becomes

\begin{align*}
& k \sum_{t > M} \left( 2 \sqrt{\frac{1}{2 \tau(t)} \log \left( \frac{k\pi^2 t^2}{3 \delta} \right) } + kT^{\frac{\alpha}{2} + \epsilon - \kappa} \left( \tau(t) + 3 \right) \right) \\
& \leq 2k \int_1^T \sqrt{\frac{k}{2 t^\alpha} \log \left( \frac{k\pi^2 t^2}{3 \delta} \right) } \; \ud t + 4k^2 T^{\frac{\alpha}{2} + \epsilon - \kappa} \sum_{t=1}^T \frac{t^\alpha}{k} \\
& = 2k \sqrt{k}  \int_1^T \sqrt{t^{-\alpha} \log \left( Ct \right) } \; \ud t + 4k T^{\frac{\alpha}{2} + \epsilon - \kappa} \frac{T^{\alpha + 1}}{\alpha + 1}.
\end{align*}

The second term in the above expression is $O(kT^{\frac{\alpha}{2} + \epsilon - \kappa + \alpha + 1})$, which is $O(kT^{1 - \frac{\alpha}{2}})$, since $\kappa > 2 \alpha + \epsilon$. By letting $\log (Ct) = x$, the first term in the above expression becomes
\begin{displaymath}
2k \sqrt{k} \int_{\log C}^{\log (CT)} \sqrt{\frac{1}{C^{-\alpha}} e^{-\alpha x} x} \; \frac{e^x}{C} \; \ud x \\
= \frac{2k \sqrt{k}}{C^{1 - \alpha/2}} \int_{\log C}^{\log (CT)}  e^{x \left(1 - \frac{\alpha}{2} \right) } \sqrt{x} \; \ud x.
\end{displaymath}

Let $\sqrt{x} = t$, then $\frac{1}{2 \sqrt{x}} \ud x = \ud t \implies \sqrt{x} \ud x = 2x \ud t$ and $x = t^2$. The above expression becomes

\begin{align*}
\frac{4k \sqrt{k}}{C^{1 - \alpha/2}} \int_{\sqrt{\log C}}^{\sqrt{\log (CT)}}  e^{t^2(1 - \frac{\alpha}{2})} t^2 \; \ud t = & \frac{4k \sqrt{k}}{C^{1 - \alpha/2}} \frac{1}{1 - \frac{\alpha}{2}} \int_{\sqrt{(1 - \frac{\alpha}{2}) \log C}}^{\sqrt{(1 - \frac{\alpha}{2}) \log (CT)}}  e^{t^2} t^2 \; \ud t \\
\leq & \frac{4k \sqrt{k}}{C^{1 - \alpha/2}} \frac{1}{1 - \frac{\alpha}{2}} \frac{t e^{t^2}}{2} |_{\sqrt{(1 - \frac{\alpha}{2}) \log C}}^{\sqrt{(1 - \frac{\alpha}{2}) \log (CT)}} \\
\leq & \frac{2k \sqrt{k}}{C^{1 - \alpha/2}} \frac{1}{1 - \frac{\alpha}{2}} \sqrt{ \left( 1 - \frac{\alpha}{2} \right) \log (CT)} \; e^{(1 - \frac{\alpha}{2}) \log (CT)} \\
= & \frac{2k \sqrt{k}}{\sqrt{1 - \frac{\alpha}{2}}}  \; T^{1 - \frac{\alpha}{2}} \sqrt{\log (CT)}.
\end{align*}

The first inequality above is obtained by using integration by parts with functions $t$ and $t e^{t^2}$. The overall regret bound becomes
\begin{align*}
R(T) \leq O & \left(k^{\frac{3}{2}}  T^{1 - \frac{\alpha}{2}} \sqrt{\log (CT)} \right) + \delta T.
\end{align*}

Now, $\delta_1 = \frac{1}{T^{\frac{\alpha}{2}}}$. So, letting $\delta_2 = \frac{1}{T^{\frac{\alpha}{2}}}$, we have $\delta = \delta_1 + \delta_2 = O(\frac{1}{T^{\frac{\alpha}{2}}})$ and hence $\delta T = O(T^{1 - \frac{\alpha}{2}})$. Therefore,
\begin{align*}
R(T) & \leq O  \left(k^{\frac{3}{2}}  T^{1 - \frac{\alpha}{2}} \sqrt{\log \left( \sqrt{\frac{k\pi^2}{6}} T^{1 + \frac{\alpha}{4}} \right) } \right), \: \text{provided} \\
\kappa & > 2 \alpha + \epsilon, \\
\epsilon & > \frac{1}{\log T} \log \left( \frac{1}{2\log (\frac{18}{11})} \log T \right),\:\:\:\mbox{and} \\
\alpha & < 2 - \sqrt{2\epsilon + 1}.
\end{align*}

\section{Experiments}
\label{experiments}
In this section, we present results from applying the proposed algorithm in a simulated environment. We consider a bandit with $k=10$ arms. The initial expected rewards for all arms are chosen uniformly from $[0.05, 0.95]$. The rewards at each time step are sampled from a beta distribution. For non-stationarity, each arm is assigned randomly at the beginning to be drifting either upwards or downwards. An upward drifting arm is more likely to drift upwards with some fixed probability $0.8$ and vice versa. At each time step, the drift in the expected value of each arm is sampled uniformly from $[0, T^{-\kappa}]$ and added to the expected reward while also constraining it to remain within the original interval. The results are plotted in Figure \ref{fig_plot1} as the ratio between the regret $R(t)$ achieved by the algorithm and the regret bound $B(T) = k^{\frac{3}{2}} T^{1 - \frac{\alpha}{2}} \sqrt{\log \left( \frac{1}{\log \left( \frac{18}{11} \right) } T^{1 + \frac{\alpha}{4}} \right) }$.

\begin{figure*}
\centering

\begin{subfigure}{0.32\textwidth}
\centering
\includegraphics[width=\textwidth]{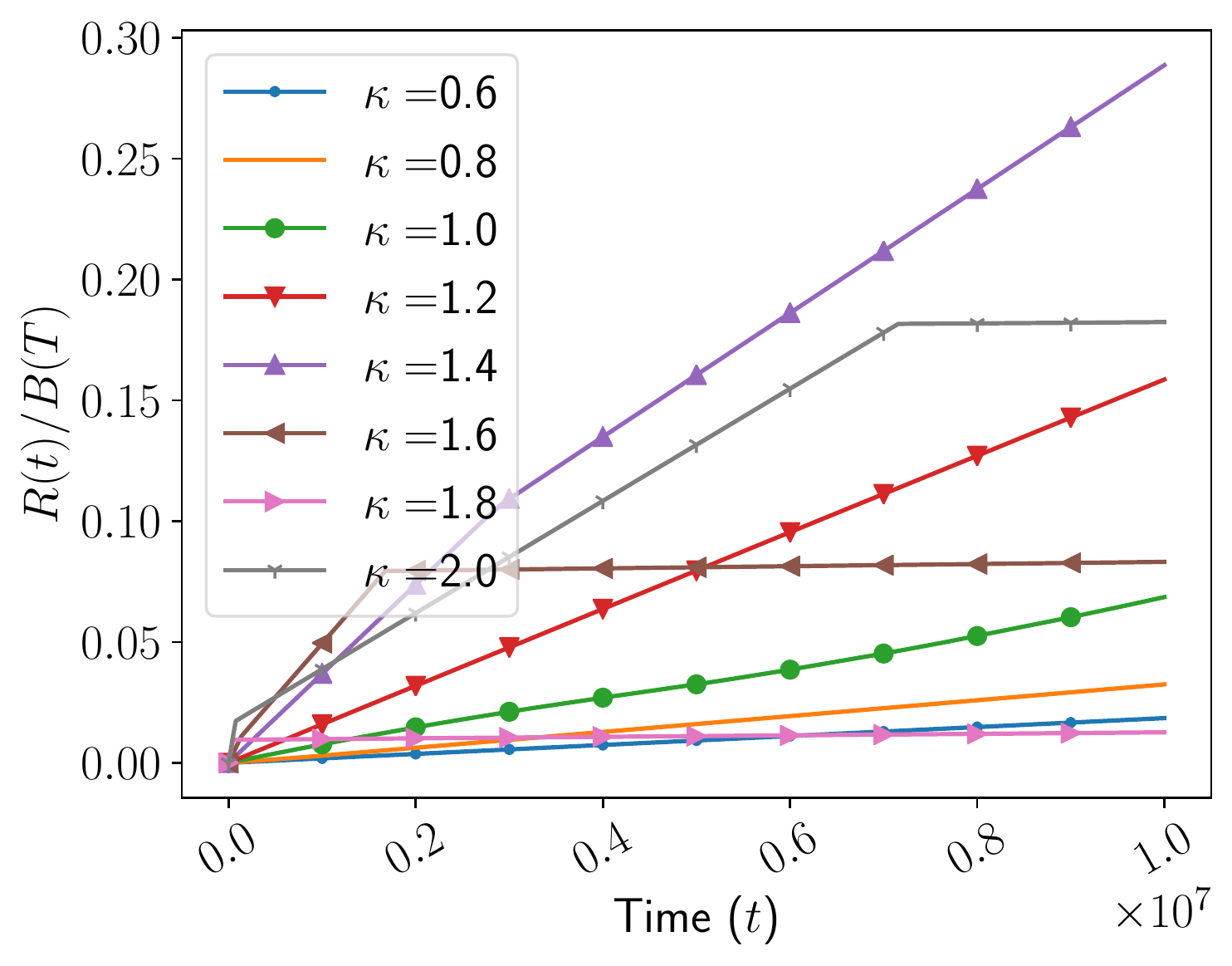}
\caption{}
\label{fig_plot1}
\end{subfigure}
\begin{subfigure}{0.32\textwidth}
\centering
\includegraphics[width=\textwidth]{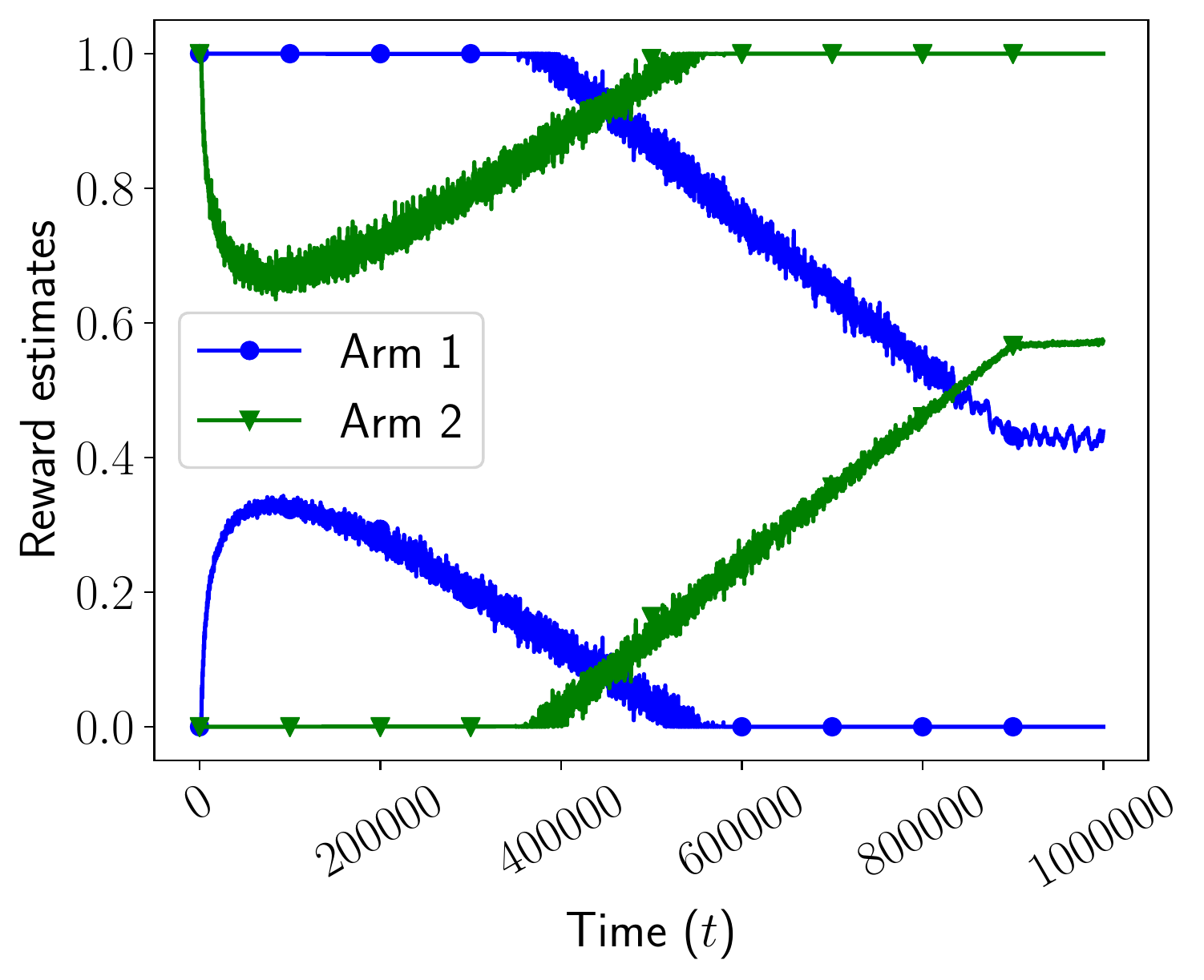}
\caption{}
\label{fig_plot2}
\end{subfigure}
\begin{subfigure}{0.32\textwidth}
\centering
\includegraphics[width=\textwidth]{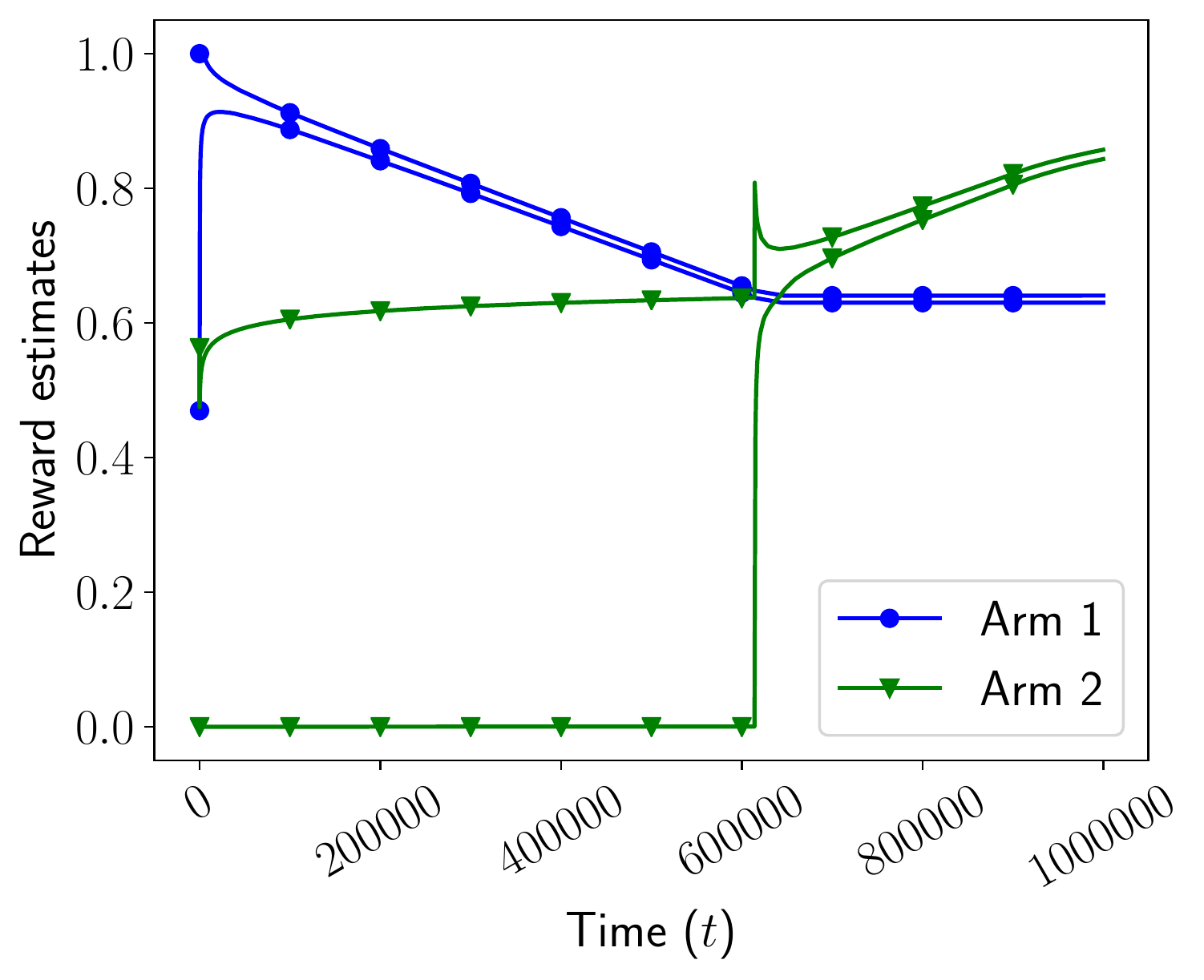}
\caption{}
\label{fig_plot3}
\end{subfigure}
\caption{(a) Plot showing the ratio of regret $R(t)$ achieved by the algorithm and the regret bound $B(T)$, for time horizon $T = 10^7$ and multiple values of $\kappa$. A smaller value of $\kappa$ indicates a fast changing environment. (Best viewed in colour.) (b) and (c) show the change in the confidence intervals of the two arms under \textsc{Fair-UCB}e and \textsc{FairBandits} respectively in an environment with two arms, time horizon $T = 10^6$ and $\kappa = 1.0$, where the expected rewards of the arms continuously evolve in opposite directions. It can be observed that due to lack of exploration, throughout most of the time horizon, \textsc{FairBandits} fails to accurately estimate the reward distribution of whichever arm is not in the active set at that point in time and thus does not maintain fairness.}
\end{figure*}

The apparent linear increase in the regret with time is due to the constant exploration probability at each time step, which is necessary to ensure fairness. This does not contradict the sublinear regret bound since the bound is for the cumulative expected regret achieved over the entire time horizon and does not constrain how the regret changes with time. It can be seen from the Figure \ref{fig_plot1} that the cumulative regret achieved does not exceed the derived upper bound.

The sharp changes in the slopes of some of the lines in the plot correspond to changes in the composition of the active set. Dropping a sub-optimal arm from the active set results in a drastic reduction in the expected regret at each time step of exploitation.

To further illustrate the necessity of a sliding window and exploration to deal with non-stationarity, we consider another experiment with two arms and parameters $T = 10^6$ and $\kappa = 1.0$. In this experiment, we let the first arm start with an expected reward of $0.95$ and decrease by $T^{-\kappa},$ the maximum amount possible, at each time step. Similarly, we let the second arm start with an expected reward of $0.05$ and increase it by $T^{-\kappa}$ at each time step. We run \textsc{Fair-UCB}e and \textsc{FairBandits} in this environment and plot the upper and lower confidence bounds of the arms in Figures \ref{fig_plot2} and \ref{fig_plot3} respectively.

Under \textsc{Fair-UCB}e, we can see the gradual shift of the confidence intervals of both arms as the underlying reward distributions change. In contrast, under \textsc{FairBandits}, we observe that as soon as the first arm becomes known to be better than the second arm, the latter is discarded from the active set, and the algorithm loses track of its reward distribution. Only much later does the estimated mean reward of the first arm become low enough for its confidence interval to overlap with that of the second arm. The algorithm then becomes aware of the change in the reward distribution of the second arm. After a few time steps, the active set again contains only one arm, this time the second arm, and the lack of exploration leads to a biased estimation of the other arm again, as seen by the lack of change in the confidence bounds of the first arm nearing the end of the horizon. Thus, we see that every aspect of our algorithm is crucial for dealing with a non-stationary environment while being fair to all arms.

\section{Related work}
\label{sec_related}
In this work, we considered a specific version of a slowly varying environment to study regret bounds for a fair algorithm solving a non-stationary multi-armed bandit problem. Several variations of non-stationary bandit problems have been extensively studied, which constrain in various ways how the reward distributions of the arms change as time passes. \citet{garivier2011upper} studied a setting with a constraint on the number of times of occurrence of an arbitrary change in the reward distribution, referred to as an abruptly changing environment. Alternatively, change in the reward distribution could be allowed at every time instant, but the nature of each change is constrained, leading to a slowly varying or drifting environment like the one considered in this work. The constraint could be on the absolute change in the expected reward at each time step~\citep{wei2018abruptly}, or a stochastic change in the form of a known distribution~\citep{slivkins2008adapting}. Another extensively studied setting consists of a constraint on the total absolute variation, throughout all time steps, of the expected rewards~\citep{besbes2019optimal, russac2019weighted}.

Our non-stationarity assumption is similar to that of \citet{wei2018abruptly}. Their algorithm, SW-UCB\# (Sliding Window UCB), was designed for dealing with a slowly varying environment in which the change in the expected mean of each arm's reward distribution is assumed to be $O(T^{-\kappa})$, which is a slightly weaker assumption than ours. Its use of an increasing sliding window of samples to estimate the current mean is similar to our algorithm. At each step, instead of considering all the reward samples obtained till then, the only rewards used are those obtained in the last $\lambda t^\alpha$ time steps, for suitable values of $\alpha$ and $\lambda$. However, their work differs from ours in terms of fairness due to the lack of explicit exploration and the arm's deterministic selection with the highest upper confidence bound at every time step. Another difference is that they use a sliding window of a certain size, whereas, in our method, a certain number of latest samples are used, irrespective of how old those samples are.

A similar algorithmic choice for dealing with non-stationarity is the use of a fixed-size sliding window. Another technique is to discount older rewards when estimating the expected reward of an arm. This ensures that older samples affect the estimation less and reduce bias in the estimation induced due to the environment's non-stationarity. These two approaches have been studied by \citet{garivier2011upper} for abruptly varying environments.

EXP3~\cite{auer2002nonstochastic} is an algorithm used for solving an adversarial multi-armed bandit. \citet{besbes2019optimal} repurposed this algorithm and showed that by restarting it every $\Delta$ time steps, for some suitable $\Delta$, this algorithm can be used to solve a stochastic and non-stationary bandit and achieves the lower bound of regret achievable in that setting.

The notion of $\delta$-fairness considered in this paper has been studied for classic contextual bandits in \cite{joseph2016fairness}. Another notion of fairness is to constrain the fraction of times an arm is chosen by a pre-specified lower bound \citep{li2019combinatorial}. However, the $\delta$-fairness of \cite{joseph2016fairness} differs from this notion significantly since this definition of fairness depends on an external lower bound specification independent of the reward distributions of the arms themselves.

\section{Conclusion}
\label{sec_conclusion}
In this work, we have studied the problem of designing a $\delta$-fair algorithm for a stochastic non-stationary multi armed bandit problem. Our non-stationarity assumption is that the absolute change in the expected reward of each arm is assumed to be at most $T^{-\kappa}$ at each time step, for some known $\kappa \in \mathbb{R}_+$. We have shown that the proposed algorithm \textsc{Fair-UCB}e indeed satisfies $\delta$-fairness condition, for $\delta \geq 2 T^{\frac{-\alpha}{2}}$. We also show that it 
achieves a regret of $O \big(k^{\frac{3}{2}}  T^{1 - \frac{\alpha}{2}} \sqrt{\log c T^{1 + \frac{\alpha}{4}})} \big)$, for some constant $c \in \mathbb{R}_+$.

\appendix
\section{Proof of Lemma~\ref{lemma:spread}-\ref{lemma:confidence}}

For the proof of Lemma \ref{lemma:spread}, we need the following result.
\begin{lemma}
For $x \in [0, \frac{1}{2}]$, $n \geq 1$,
$
\left( 1 - \frac{x}{n} \right)^n \leq 1 - \frac{7x}{9}.
$
\label{lemma:x}
\end{lemma}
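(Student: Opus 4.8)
The plan is to use convexity of $x \mapsto (1-x/n)^n$ together with a single numerical estimate at the right endpoint $x = \tfrac12$. Fix $n \ge 1$ (not necessarily an integer) and set $f(x) = (1 - x/n)^n$ on $[0,\tfrac12]$. On this interval $1 - x/n \ge 1 - \tfrac{1}{2n} > 0$, so $f$ is twice differentiable with $f''(x) = \tfrac{n-1}{n}\,(1 - x/n)^{n-2} \ge 0$; hence $f$ is convex on $[0,\tfrac12]$. Writing an arbitrary $x \in [0,\tfrac12]$ as the convex combination $x = (1-2x)\cdot 0 + 2x\cdot \tfrac12$, convexity gives
\begin{displaymath}
f(x) \le (1-2x)\,f(0) + 2x\,f(\tfrac12) = 1 - 2x\bigl(1 - f(\tfrac12)\bigr),
\end{displaymath}
so it suffices to prove $2\bigl(1 - f(\tfrac12)\bigr) \ge \tfrac79$, i.e. $f(\tfrac12) = \bigl(1 - \tfrac{1}{2n}\bigr)^n \le \tfrac{11}{18}$.

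To control the right endpoint uniformly in $n$, I would use $\log(1+u) \le u$: then $n\log\bigl(1 - \tfrac{1}{2n}\bigr) \le -\tfrac12$, so $\bigl(1 - \tfrac{1}{2n}\bigr)^n \le e^{-1/2}$ for every $n \ge 1$. It remains only to verify the numerical fact $e^{-1/2} \le \tfrac{11}{18}$, equivalently $e \ge (18/11)^2 = 324/121 \approx 2.678$, which holds since $e > 2.718$. Combining the convexity bound with $f(\tfrac12) \le e^{-1/2} \le \tfrac{11}{18}$ gives $f(x) \le 1 - 2x\cdot\tfrac{7}{18} = 1 - \tfrac79 x$ on $[0,\tfrac12]$, as required.

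I do not expect a genuine obstacle here; the only point needing a little care is that the endpoint estimate $\bigl(1 - \tfrac{1}{2n}\bigr)^n \le \tfrac{11}{18}$ must be uniform in $n$, and this is exactly what the monotone bound by $e^{-1/2}$ delivers with no case analysis. An equivalent route, if one prefers not to invoke convexity of $f$ directly, is to set $g(x) := 1 - \tfrac79 x - (1-x/n)^n$ and observe that $g'(x) = -\tfrac79 + (1-x/n)^{n-1}$ is nonincreasing on $[0,\tfrac12]$, so $g$ is concave with $g(0) = 0$ and $g(\tfrac12) \ge \tfrac{11}{18} - e^{-1/2} > 0$; a concave function lying above the chord joining $(0,0)$ and $(\tfrac12, g(\tfrac12))$ is then nonnegative on $[0,\tfrac12]$, which is the same conclusion.
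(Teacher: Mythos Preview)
Your proof is correct and follows essentially the same approach as the paper: both arguments use convexity of $x \mapsto (1-x/n)^n$ on $[0,\tfrac12]$, reduce via the convex combination $x = (1-2x)\cdot 0 + 2x\cdot\tfrac12$ to bounding the endpoint $(1-\tfrac{1}{2n})^n$ by $e^{-1/2}$, and then check $2-2/\sqrt{e} \ge 7/9$. Your version is slightly cleaner in handling convexity uniformly for all $n\ge 1$ via $f'' \ge 0$ and bounding the endpoint directly through $\log(1+u)\le u$, whereas the paper splits into cases $n=1,2,>2$ and appeals to monotone convergence of the sequence $(1-\tfrac{1}{2n})^n \nearrow e^{-1/2}$.
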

\begin{proof}
For $n \geq 1$, $f_n(x) = ( 1 - \frac{x}{n} )^n$ is a convex function in $[0, \frac{1}{2}]$. For $n = 1$, this is clear since it is linear, and for $n = 2$, the function becomes $(\frac{x}{2} - 1)^2$, which is clearly a convex quadratic function. For $n > 2$, $f'_n(x)=-(1-\frac{x}{n})^{n - 1}$ and $f''_n(x) = \frac{n-1}{n} (1 - \frac{x}{n})^{n-2} \geq 0$.

Now, consider the sequence $f_n(1/2) = ( 1 - \frac{1}{2n} )^n$. Clearly, it is an increasing sequence, and it converges to $1 / \sqrt{e}$, which implies $f_n(1/2) \leq 1 / \sqrt{e}$.

So, for $x \in [0, \frac{1}{2}]$, $x = (1 - 2x).0 + 2x.\frac{1}{2}$, and by the convexity of $f_n$,

\begin{align*}
f(x) & \leq (1 - 2x). f(0) + 2x. f \bigg( \frac{1}{2} \bigg) \\
& \leq 1 - 2x + 2x. \frac{1}{\sqrt{e}} \\
& = 1 - x \bigg( 2 - \frac{2}{\sqrt{e}} \bigg) \\
& \leq 1 - \frac{7x}{9},
\end{align*}
since $2 - \frac{2}{\sqrt{e}} = 0.7869...$ and $\frac{7}{9} = 0.777..$.
\end{proof}

\subsection{Proof of Lemma~\ref{lemma:spread}}
The probability that more than $N$ time steps are required to obtain a single sample is at most
$\left( 1 - \frac{1}{kT^{\frac{\alpha}{2}}}  \right)^N$, for an arm to be rejected consecutively at least $N$ times. This is the probability that, for a specific arm, there is no sample in a single time interval of length $N$. We need to consider this failure probability for all intervals and all arms. So, setting interval size $N = kT^{\frac{\alpha}{2} + \epsilon}$ and dividing the available failure probability $\delta$ into $k$ parts for each of the arms, we need
\begin{displaymath}
\frac{T}{kT^{\frac{\alpha}{2} + \epsilon}} \left( 1 - \frac{1}{kT^{\frac{\alpha}{2}}} \right)^{kT^{\frac{\alpha}{2} + \epsilon}} < \frac{\delta_1}{k},
\end{displaymath}
or equivalently,
\begin{displaymath}
\left( 1 - \frac{1}{kT^{\frac{\alpha}{2}}} \right)^{kT^{\frac{\alpha}{2} + \epsilon}} < \delta_1  T^{\frac{\alpha}{2} + \epsilon - 1}  = T^{\epsilon - 1}.
\end{displaymath}
By Lemma \ref{lemma:x}, with $x = \frac{1}{2}$ and $n = \frac{kT^{\frac{\alpha}{2}}}{2}$,
\begin{align*}
\left( 1 - \frac{1}{kT^{\frac{\alpha}{2}}} \right)^{kT^{\frac{\alpha}{2} + \epsilon}} & = \left( 1 - \frac{1/2}{kT^{\frac{\alpha}{2}}/2} \right)^{\frac{kT^{\frac{\alpha}{2}}}{2}2T^\epsilon} \\
& \leq \left( 1 - \frac{7}{18} \right)^{2 T^\epsilon}.
\end{align*}

So, for the Lemma to hold, $\epsilon$ should satisfy $\big( \frac{11}{18} \big)^{2 T^\epsilon} < T^{\epsilon - 1}$. From Algorithm 1, we have
\begin{align*}
\epsilon & > \frac{1}{\log T} \log \left( \frac{1}{2\log (\frac{18}{11})} \log T \right),
\end{align*}
which implies $T^\epsilon  > \frac{1}{2\log (\frac{18}{11})} \log T > \frac{\epsilon - 1}{2\log (\frac{11}{18})} \log T$, and hence $\left( \frac{11}{18} \right)^{2 T^\epsilon} < T^{\epsilon - 1}$.

\subsection{Proof of Lemma~\ref{lemma:eps}}
Consider the function $f(x) = \frac{\log x}{x}$. $f'(x) = \frac{1 - \log x}{x^2}$, which is positive for $x < e$ and negative for $x > e$. So, $f(x)$ attains its maximum at $x = e$, with $f(e) = \frac{1}{e}$. Hence,
\begin{displaymath}
\frac{1}{\log T} \log \left( \frac{1}{2\log (\frac{18}{11})} \log T \right) \\
= \frac{\log \left( \frac{1}{2\log (\frac{18}{11})} \log T \right)}{\frac{1}{2\log (\frac{18}{11})} \log T} \frac{1}{2\log (\frac{18}{11})}
\end{displaymath}
attains a maximum of $\frac{1}{2e \log (\frac{18}{11})}$, at $\frac{1}{2\log (\frac{18}{11})} \log T = e$ or $T = e^{2e \log (\frac{18}{11})}$, which is approximately 14.5669... Since we considered the maximum value possible, this upper bound is a worst case bound for the value of $\epsilon$ and for $T \geq 15$, this can be improved considerably, since the function reduces to $0$ as $T$ increases.

\subsection{Proof of Lemma~\ref{lemma:confidence}}
We have
\begin{align*}
\mathsf{P}\left\{ \left| \sum_{t'} \left( X_i^{(t')} - \mu_i^{(t')} \right) \right| \geq A \right\} \leq 2 \exp \left( \frac{-2 A^2}{\tau_i(t)} \right).
\end{align*}

Let $\mu_i^{(t')} = \mu_i^t + \epsilon_t^{(t')}$, where $\mu_i^t$ is the true mean at time $t$, and the true mean at a previous time differs from this by some error $\epsilon_t^{(t')}$, which is bounded by $T^{-\kappa}$ times the time difference, since $G$ is true. Now,
\begin{align*}
 \left| \sum_{t'} \left( X_i^{(t')} - \mu_i^{(t')} \right) \right| &
 = \left| \sum_{t'} \left( X_i^{(t')} - \mu_i^t - \epsilon_t^{(t')} \right) \right| \\
& \geq \left| \sum_{t'} \left( X_i^{(t')} - \mu_i^t \right) \right| - \left| \sum_{t'} \epsilon_t^{(t')}  \right| \\
& \geq \left| \sum_{t'} \left( X_i^{(t')} - \mu_i^t \right) \right| - \sum_{t'} \left| \epsilon_t^{(t')}  \right|.
\end{align*}
So, for $\left| \sum_{t'} \left( X_i^{(t')} - \mu_i^{(t')} \right) \right| \geq A$, it is sufficient that
\begin{displaymath}
\frac{1}{\tau_i(t)} \left| \sum_{t'} \left( X_i^{(t')} - \mu_i^t \right) \right| \geq \frac{1}{\tau_i(t)} \left( A + \sum_{t'} \left| \epsilon_t^{(t')} \right| \right) . 
\end{displaymath}
Furthermore,
\begin{align*}
\quad \frac{1}{\tau_i(t)} \left( A +  \sum_{t'} \left| \epsilon_t^{(t')} \right| \right) & \leq \frac{1}{\tau_i(t)} \left( A +  \sum_{j=1}^{\tau_i(t)} (j+1) kT^{\frac{\alpha}{2} + \epsilon} T^{-\kappa} \right) \\
& \leq \frac{1}{\tau_i(t)} \left( A +  kT^{\frac{\alpha}{2} + \epsilon - \kappa} \sum_{j=1}^{\tau_i(t)} (j+1) \right) \\
& \leq \frac{1}{\tau_i(t)} \left( A +  kT^{\frac{\alpha}{2} + \epsilon - \kappa} \frac{\tau_i(t)(\tau_i(t) + 3)}{2} \right) \\
& = \frac{A}{\tau_i(t)} + kT^{\frac{\alpha}{2} + \epsilon - \kappa} \frac{(\tau_i(t) + 3)}{2}.
\end{align*}
Therefore, for $\hat{\mu}_i^t = \frac{1}{\tau_i(t)} \sum_{t'} X_i^{(t')}$, the empirical estimate of the expected reward of arm $i$ at time $t$ using the latest $\tau_i(t)$ samples obtained from that arm, and $c_i = \frac{A}{\tau_i(t)} + kT^{\frac{\alpha}{2} + \epsilon - \kappa} \frac{(\tau_i(t) + 3)}{2} $, we have
\begin{align*}
\mathsf{P} \left\{ \mu_i^t \notin \left( \hat{\mu}_i^t - c_i, \hat{\mu}_i^t + c_i \right) \right\} = & \; \; \mathsf{P} \left\{ \frac{1}{\tau_i(t)} \left| \sum_{t'} \left( X_i^{(t')} - \mu_i^t \right) \right| \geq c_i \right\} \\
\leq & \; \; \mathsf{P} \bigg\{ \frac{1}{\tau_i(t)} \left| \sum_{t'} \left( X_i^{(t')} - \mu_i^t \right) \right| \geq \frac{1}{\tau_i(t)} \left( A + \sum_{t'} \left| \epsilon_t^{(t')} \right| \right) \bigg\} \\
\leq & \; \; \mathsf{P} \left\{ \left| \sum_{t'} \left( X_i^{(t')} - \mu_i^{(t')} \right) \right| \geq A \right\} \\
\leq & \; \; 2 \exp \left( \frac{-2 A^2}{\tau_i(t)} \right).
\end{align*}

\footnotesize
\bibliographystyle{chicago}
\bibliography{paper}

\end{document}